
\documentclass{article}

\usepackage{microtype}
\usepackage{graphicx}
\usepackage{subfigure}
\usepackage{booktabs} 
\usepackage{mdframed}

\usepackage{hyperref}


\usepackage[accepted]{icml2023}


\usepackage{amsmath}
\usepackage{amssymb}
\usepackage{mathtools}
\usepackage{amsthm}

\usepackage[capitalize,noabbrev]{cleveref}

\theoremstyle{plain}
\newtheorem{theorem}{Theorem}[section]

\newtheorem{lemma}[theorem]{Lemma}

\theoremstyle{definition}

\theoremstyle{remark}

\usepackage[textsize=tiny]{todonotes}

\definecolor{calpolypomonagreen}{rgb}{0.12, 0.7, 0.17}

\mdfdefinestyle{MyFrame}{
	linecolor=black,
	backgroundcolor=gray!10,
	innertopmargin=5pt,
	innerbottommargin=10pt,
	innerrightmargin=5pt,
	innerleftmargin=5pt,
}
\newcommand{\pitfallenv}[4]{%
	\vspace{2mm}
	\noindent\begin{minipage}{\columnwidth}%
	\begin{mdframed}[style=MyFrame]
		\textbf{#1.} #2
	\end{mdframed}%
\end{minipage}%
}

\icmltitlerunning{Theoretically Principled Trade-off for Stateful Defenses against Query-Based Black-Box Attacks}

\begin{document}

\twocolumn[
\icmltitle{Theoretically Principled Trade-off for Stateful Defenses against Query-Based Black-Box Attacks}



\icmlsetsymbol{equal}{*}

\begin{icmlauthorlist}
\icmlauthor{Ashish Hooda}{equal,1}
\icmlauthor{Neal Mangaokar}{equal,2}
\icmlauthor{Ryan Feng}{2}
\icmlauthor{Kassem Fawaz}{1}
\icmlauthor{Somesh Jha}{1}
\icmlauthor{Atul Prakash}{2}
\end{icmlauthorlist}

\icmlaffiliation{1}{University of Wisconsin-Madison}
\icmlaffiliation{2}{University of Michigan}

\icmlcorrespondingauthor{Ashish Hooda}{ahooda@wisc.edu}
\icmlcorrespondingauthor{Neal Mangaokar}{nealmgkr@umich.edu}

\icmlkeywords{Machine Learning, ICML}

\vskip 0.3in
]



\printAffiliationsAndNotice{\icmlEqualContribution} 

\begin{abstract}
 Adversarial examples threaten the integrity of machine learning systems with alarming success rates even under constrained black-box conditions. Stateful defenses have emerged as an effective countermeasure, detecting potential attacks by maintaining a buffer of recent queries and detecting new queries that are too similar. However, these defenses fundamentally pose a trade-off between attack detection and false positive rates, and this trade-off is typically optimized by hand-picking feature extractors and similarity thresholds that empirically work well. There is little current understanding as to the formal limits of this trade-off and the exact properties of the feature extractors/underlying problem domain that influence it. This work aims to address this gap by offering a theoretical characterization of the trade-off between detection and false positive rates for stateful defenses. We provide upper bounds for detection rates of a general class of feature extractors and analyze the impact of this trade-off on the convergence of black-box attacks. We then support our theoretical findings with empirical evaluations across multiple datasets and stateful defenses.
\end{abstract}
\section{Introduction}



Adversarial examples pose a significant threat to the security and integrity of machine learning systems~\cite{8578273,Sayles_2021_CVPR}. These examples are subtly manipulated inputs that deceive the models and cause misclassifications~\cite{DBLP:journals/corr/SzegedyZSBEGF13,7958570,hooda2022adversarially}. Even in the challenging black-box setting, where the attacker has limited information access, adversarial examples have been remarkably successful~\cite{ilyas2018black,chen2020hopskipjumpattack,feng2022graphite,maho2021surfree,andriushchenko2020square,li2020qeba}.

Recent research has shown that stateful defenses offer a promising approach to mitigate the impact of such attacks~\cite{li2022blacklight, choi2023piha, chen2020stateful}. These defenses leverage the observation that black-box attackers often submit numerous highly similar queries, e.g., querying nearby points for gradient estimation. To counter this, stateful defenses maintain a buffer of recent queries and compare incoming queries in some feature space to identify potential attacks. If the similarity between queries exceeds a predefined threshold, defensive action is taken, e.g., banning the user's account~\cite{chen2020stateful} or rejecting queries~\cite{li2022blacklight}. 

The success of stateful defenses hinges on their ability to detect and flag attack queries without flagging benign ones. 
This suggests the existence of a trade-off between the detection and false positive rates of a stateful defense (much like the trade-off between robustness and accuracy for existing white-box defenses~\cite{tsipras2018robustness,yang2020closer,raghunathan2020understanding}).
In light of this, existing defenses typically tune their similarity threshold to manipulate the trade-off, i.e., such that the defense only permits an empirically computed false positive rate. However, this does not provide any guarantees for the detection rate, and little is currently known about the exact properties of the feature spaces and problem domains that influence this trade-off. This work aims to address this gap by theoretically characterizing the trade-off between the detection rate and the false positive rate of stateful defenses. Specifically, we provide upper bounds for the detection rate for a general class of feature extractors. We then empirically validate that the takeaways from these bounds hold for multiple datasets and defenses and also analyze how this trade-off affects the convergence of black-box attacks.
\section{Background}

\subsection{Black-box Attacks}
Adversarial Examples are perturbed inputs that intentionally mislead or deceive machine learning models. Specifically, given an image $\mathbf{x}$ with label $y$ and a classifier $f$, such attacks aim to construct an adversarial example $\textbf{x}_{adv}$ such that:
\begin{equation}
    f(\mathbf{x}_{adv}) \neq y ~~\text{and}~~ ||\mathbf{x}_{adv} - \mathbf{x}||_p \leq \epsilon 
\end{equation}
where $\epsilon$ is the perturbation budget per some $\ell_p$ norm. In the black-box setting, these attacks only have on query access to the model. One common characteristic of black-box attacks is the use of similar queries to gather information about the model's behavior. Specifically, by making queries with slight perturbations to the input and observing the corresponding model outputs, attackers can gain insights into the model's decision-making process. 

Consider the initial stage of many black-box adversarial attacks, which involves estimating the direction to move the input to achieve the desired adversarial effect. For example, the NES~\cite{ilyas2018black}, HSJA~\cite{chen2020hopskipjumpattack}, and QEBA~\cite{li2020qeba} attacks estimate the gradient by sampling nearby points from a Gaussian (or similar) probability distributions, and computing finite differences over these points. Other attacks such as SurFree~\cite{maho2021surfree} and Square~\cite{andriushchenko2020square} also sample nearby points to estimate a ``random search'' direction (not a gradient) in which to move the input. We will often refer to the interplay between such queries made during the direction estimation stage and a stateful defense, particularly because the attack's overall convergence properties are often directly influenced by choice of direction. 


\subsection{Stateful Defenses}
The overall intuition behind stateful defenses is that black-box attackers often submit highly similar queries as part of the optimization procedure for their chosen adversarial task. These highly similar queries can then be detected. Defenses such as Blacklight~\cite{li2022blacklight} have reduced attack success rate (ASR) of state-of-the-art black-box attacks to as low as 0\%.

A stateful defense typically comprises a classifier $f$, feature extractor $H$ (with some associated distance metric), query store $q$, and threshold $\tau$. The defense then compares an incoming query against all queries stored in $q$. If similarity with any example in $q$ exceeds $\tau$, the defense deploys preventive measures such as query rejection or account banning.

Different stateful defenses primarily vary in their choices of $H$. Specifically, some defenses such as Blacklight and PIHA~\cite{li2022blacklight, choi2023piha} leverage discrete-valued metrics such as hamming distance over hashes, e.g., SHA-256 hashes of quantized pixels. Others, such as Stateful Defense (SD)~\cite{chen2020stateful}, employ real-valued metrics, e.g., $\ell_2$ distance between embeddings from neural similarity encoders. In this work, we evaluate  Blacklight and PIHA since they are available for both the CIFAR-10 and ImageNet datasets.

\textbf{Model Stealing}
Recent work has also proposed stateful defenses against model-stealing attacks. Such attacks aim to steal a local ``clone'' model $f^{c}$ such that the behavior of $f^{c}$ is similar to that of $f$. Defenses such as SEAT~\cite{zhang2021seat} have also been successful here and can force the attacker to create as many as 65 accounts to steal a single model. This success can be similarly explained by the submission of highly similar queries. For example, at iteration $t$ of a Jacobian-based Augmentation (JBA) attack~\cite{papernot2017practical}, the adversary constructs a ``useful'' but highly similar query $\mathbf{x}_{t+1}$ by perturbing previous query $\mathbf{x}_{t}$ so that it maximizes the loss $\mathcal{L}$ of $f^{c}_t$:
\begin{equation}
    \mathbf{x}_{t+1} = \mathbf{x}_t + \eta * sgn(\nabla_{\textbf{x}_t} \mathcal{L}(\mathbf{x}_t, f(\mathbf{x}_t)))
\end{equation}
where $\eta$ is some step size.






\section{Trade-offs between Detection and False Positives}\label{sec:tradeoff}

In this section, we demonstrate that there exists an implicit trade-off between detecting attack queries and avoiding false positives in the context of stateful defenses. We begin with a constructive model through which we provide explicit characterizations of the feature extractor and data distributions. We use this toy model to highlight the trade-off, and then relax the assumptions to provide a more general bound that highlights the direct influence of the feature extractor and the problem domain.

\subsection{Toy Model}


\noindent\textbf{Feature extractor.} We begin by considering an explicit class of feature extractors based on simple quantization. The feature extractor is given by $H : \mathbb{R}^d \rightarrow \mathbb{Z}^d$ with a discrete output space. Specifically, 
\begin{equation}\label{eqn:hash_fn_toy_model}
    H(\mathbf{x}) = \lfloor \mathbf{x} + \mathbf{0.5}\rfloor
\end{equation}
where the $\lfloor . \rfloor$ operation is element-wise. Many defenses employ quantization to provide perceptual similarity~\cite{li2022blacklight,choi2023piha}. In this model, we consider a query to be an attack query if and only if it produces the exact same features as that of a prior query. Later, in Section~\ref{sec:general_analysis} we expand beyond the toy model to consider the case where $H$ is a generic feature extractor, and queries are considered attack queries when their features are within some distance $\tau$ of a prior query.

\begin{figure}
    \centering
    \includegraphics[width=0.5\textwidth]{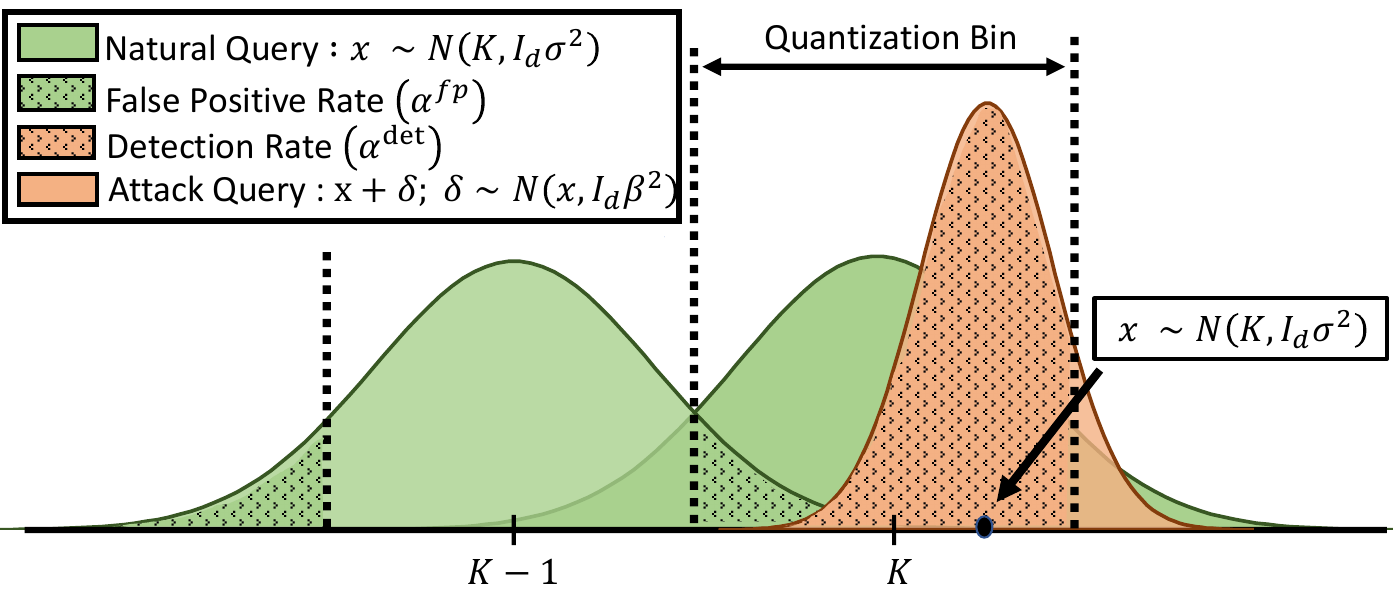}
    \caption{Illustration of the Toy Model in 1-D. We assume that any natural query $\mathbf{x}$ is sampled from a distinct Gaussian distribution (green). Two such distributions are shown, centered at $K-1$ and $K.$ For a given natural query $\mathbf{x}$, the attack queries $\mathbf{x}+\boldsymbol{\delta}$ are sampled from another Gaussian distribution (orange) centered around a natural query. The feature extractor is designed to map each natural query to a unique output. Therefore, $H$ maps all values within each quantization bin to the same output. This means that the green shaded area represents $\alpha^{fp}$, and the orange shaded area represents $\alpha^{det}$ for the attack queries. }
    \label{fig:toy_model}
\end{figure}

\noindent\textbf{Natural Query Distribution.} Stateful defenses assume that natural images are sufficiently ``spread out'', or dissimilar enough such that they can be distinguished. Therefore, for our model we assume that natural images originate from one of several Gaussian distributions, which are uniformly dispersed across input space $\mathbb{R}^d$ \footnote{For the case where the input space is constrained, for instance to [0,255], the natural images can instead be sampled from truncated Gaussian distributions.}. Each natural image is obtained from a distinct Gaussian distribution. This may be viewed as a ``best case'' situation for the defense, where natural images are sufficiently spread out across the input space to avoid false positives. For simplicity, we assume isotropic Gaussian distributions: $\mathcal{N}(\mathbf{p}, \mathbf{I_d} \sigma^2)$ where $\mathbf{p} \in \mathbb{Z}^d$. Intuitively, when applying $H$ to a natural image $\mathbf{x} \sim \mathcal{N}(\mathbf{p}, \mathbf{I_d} \sigma^2)$, it should output the discrete feature vector $\mathbf{p}$ with high probability.

\noindent\textbf{Attack Query Distribution.} To estimate the gradient at input $\mathbf{x}$, a Monte Carlo simulation approach would require sampling a total of $q$ perturbations $\{\mathbf{x},\mathbf{x + \boldsymbol{\delta}_1},...,\mathbf{x + \boldsymbol{\delta}_q}\}$. For our model, we consider the distribution of perturbations for $\mathbf{x}$ to be $\mathcal{N}(0, \mathbf{I_d}\beta^2)$, i.e., the adversary is estimating a gradient using finite differences on a Gaussian basis~\cite{ilyas2018black}.

Given the setting described above (also illustrated in Figure~\ref{fig:toy_model}), we now present the following result, which bounds the detection rate with the false positive rate:

\begin{theorem}\label{theorem:toy}
{Let the adversary sample a natural image $\mathbf{x}$ from one of the above distributions $\mathcal{N}(\mathbf{p},\mathbf{I_d}\sigma^2)$, and perturb it with $\boldsymbol{\delta} \sim \mathcal{N}(0, \mathbf{I_d}\beta^2)$ to estimate a gradient. Given that the stateful defense incurs a false positive rate $\alpha^{fp}$, the detection rate $\alpha^{det}$ for the perturbed query $\mathbf{x} + \boldsymbol{\delta}$ is then bounded as follows:}
\begin{equation}
    \alpha^{det} \leq 1 - \left(2 - 2\Phi\left(0.5\beta^{-1}\right) \right)^d(1-\alpha^{fp})
\end{equation}
\end{theorem}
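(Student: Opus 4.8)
The plan is to lower-bound the non-detection probability $1-\alpha^{det}$ by isolating an explicit event that pushes the perturbed query into a quantization bin no earlier query occupies, and then to estimate that event's probability by factorizing over coordinates and applying a one-dimensional convexity bound for the Gaussian CDF. First I would reduce: a query is flagged exactly when its feature vector equals that of some earlier query, and in this best-case setting the only earlier query near $\mathbf{x}+\boldsymbol{\delta}$ is $\mathbf{x}$ itself, so $\mathbf{x}+\boldsymbol{\delta}$ is detected iff $H(\mathbf{x}+\boldsymbol{\delta}) = H(\mathbf{x})$, giving $1-\alpha^{det} = \Pr[H(\mathbf{x}+\boldsymbol{\delta})\neq H(\mathbf{x})]$. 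I would also record that, by the construction of the natural-query distribution (the green region in Figure~\ref{fig:toy_model}), $\mathbf{x}$ lands in its own bin $B_{\mathbf{p}} := \prod_{i=1}^{d}\bigl[p_i-\tfrac12,\,p_i+\tfrac12\bigr)$ with probability $1-\alpha^{fp}$, i.e. $\Pr[\mathbf{x}\in B_{\mathbf{p}}] = 1-\alpha^{fp}$.

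Next I would restrict to the good event $\{\mathbf{x}\in B_{\mathbf{p}}\}$ and ask that $\mathbf{x}+\boldsymbol{\delta}$ leave $B_{\mathbf{p}}$ in \emph{every} coordinate; this is sufficient (but not necessary) for non-detection, which is precisely what makes the resulting bound loose. Because $H$ acts coordinate-wise and the coordinates of $\mathbf{x}$ and of $\boldsymbol{\delta}$ are all independent, this probability factorizes over $i$: writing $s_i := x_i + \tfrac12 - p_i \in [0,1)$ on the good event, the $i$-th coordinate of $\mathbf{x}+\boldsymbol{\delta}$ leaves $B_{\mathbf{p}}$ exactly when $\delta_i \notin [-s_i,\,1-s_i)$, an event of probability $\Phi(-s_i/\beta) + \Phi((s_i-1)/\beta)$. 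The crux, and where I expect the real work, is the pointwise estimate $\Phi(-s/\beta) + \Phi((s-1)/\beta) \ge 2\Phi(-\tfrac{1}{2\beta}) = 2 - 2\Phi(0.5\beta^{-1})$ for every $s\in[0,1)$: both arguments are nonpositive with mean $-\tfrac{1}{2\beta}$, and $\Phi$ is convex on $(-\infty,0]$ (there $\Phi''(x) = -x\phi(x)\ge 0$), so Jensen gives the bound — equivalently, evenness and unimodality of $\phi$ make the left-hand side minimal at the symmetric point $s=\tfrac12$.

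Substituting this estimate into each of the $d$ independent factors, then integrating over $\mathbf{x}$ and using $\Pr[\mathbf{x}\in B_{\mathbf{p}}] = 1-\alpha^{fp}$, yields $1-\alpha^{det}\ \ge\ (2 - 2\Phi(0.5\beta^{-1}))^{d}\,(1-\alpha^{fp})$, which rearranges to the claimed inequality. The one piece of bookkeeping to be careful with is commuting the coordinate-wise indicator of $\{\mathbf{x}\in B_{\mathbf{p}}\}$ past the product, so that the factorization over coordinates and the per-coordinate lower bound can be applied simultaneously; apart from that and the convexity estimate above, the argument is routine.
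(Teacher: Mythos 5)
Your proof is correct and follows the same high-level decomposition as the paper's: restrict the non-detection event to the sub-event that $\mathbf{x}$ lands in its own quantization bin (probability $1-\alpha^{fp}$) and that $\boldsymbol{\delta}$ pushes the query out of that bin, then factorize over coordinates. Two details differ, and both are improvements. First, where the paper replaces $\mathbb{P}[H(\mathbf{x}+\boldsymbol{\delta})\neq\mathbf{p}\mid H(\mathbf{x})=\mathbf{p}]$ by $\mathbb{P}[H(\mathbf{p}+\boldsymbol{\delta})\neq\mathbf{p}]$ and simply \emph{asserts} that placing $\mathbf{x}$ at the bin center minimizes the escape probability, you actually prove the per-coordinate claim via convexity of $\Phi$ on $(-\infty,0]$ and Jensen, i.e.\ $\Phi(-s/\beta)+\Phi((s-1)/\beta)\ge 2\Phi\left(-\tfrac{1}{2\beta}\right)$ for $s\in[0,1)$; this supplies the justification the paper omits. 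Second, the paper's escape event is $\|\boldsymbol{\delta}\|_{\infty}>0.5$ (``at least one coordinate leaves the bin''), whose probability is $1-\left(2\Phi\left(0.5\beta^{-1}\right)-1\right)^{d}$, yet the paper equates it to $\left(2-2\Phi\left(0.5\beta^{-1}\right)\right)^{d}$, which is the probability that \emph{all} coordinates leave; these coincide only for $d=1$, and the paper's final ``$=$'' is really a ``$\ge$'' (harmless for the direction of the bound, since the all-coordinates event is a subset of the at-least-one event, but imprecise). By bounding through the all-coordinates-escape event from the outset --- and explicitly flagging it as sufficient but not necessary --- your derivation lands exactly on the product form in the theorem statement without that slip. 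I see no gaps.
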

\begin{proof}[Proof]
$H$ fails to detect the attack query $\mathbf{x + \boldsymbol{\delta}}$ if and only if $H(\mathbf{x} + \boldsymbol{\delta}) \neq H(\mathbf{x})$. Therefore,
\begin{align}
    \alpha^{det} &= 1 - \mathbb{P}[H(\mathbf{x}) \neq H(\mathbf{x+\boldsymbol{\delta}})]\\
    & \leq 1 - \mathbb{P}[H(\mathbf{x+\boldsymbol{\delta}}) \neq \mathbf{p},H(\mathbf{x}) = \mathbf{p}]\\
    &= 1 - \mathbb{P}[H(\mathbf{x+\boldsymbol{\delta}}) \neq \mathbf{p}~|~H(\mathbf{x}) = \mathbf{p}]\mathbb{P}[H(\mathbf{x}) = \mathbf{p}]\\
    &\leq 1 - \mathbb{P}[H\left(\mathbf{p}+\mathbf{\boldsymbol{\delta}}\right) \neq \mathbf{p}]\mathbb{P}[H(\mathbf{x}) = \mathbf{p}]\\
    &= 1 - \mathbb{P}[||\mathbf{\boldsymbol{\delta}}||_{\infty} > 0.5]\mathbb{P}[H(\mathbf{x}) = \mathbf{p}]\\
    &= 1 - \left(2 - 2\Phi\left(0.5\beta^{-1}\right) \right)^d(1-\alpha^{fp})
\end{align}
where $\Phi$ is the cummulative distribution function of $\mathcal{N}(0,1)$. Note that to go from (7) to the inequality in (8), we assign a specific value $\mathbf{x} = \mathbf{p}$, i.e., placing $\mathbf{x}$ at the center of the quantization bin for $H$ (see Equation~\ref{eqn:hash_fn_toy_model}). By placing it at the center, the probability of evasion when adding $\boldsymbol{\delta}$ is minimized, and the resulting event is also independent of event $H(\mathbf{x})=\mathbf{p}$. Finally, going from (9) to (10) uses standard results for the CDF of a multivariate Gaussian.  
\end{proof}




\pitfallenv{Takeaway}
{\label{takeaway_toy_model} There exists a trade-off between the detection rate $\alpha^{det}$ and the false positive rate $\alpha^{fp}$, i.e., decreasing $\alpha^{fp}$ also decreases the upper bound for $\alpha^{det}$. Furthermore, this trade-off also depends on the standard deviation $\beta$ of the perturbation distribution, i.e., high values of $\beta$ lead to a lower detection rate.}

\subsection{General Analysis}\label{sec:general_analysis}
Recall that our toy model assumed a quantization-based feature extractor and a uniform natural image distribution. We now extend our results to a more generic perceptual feature extractor and image distribution. Specifically, consider $H : \mathbb{R}^d \rightarrow \mathbb{R}^y$ where $y$ is the dimensionality of the output feature space. We assume $H$ to be Lipschitz continuous with constants $K_L$ and $K_U$ :
\begin{align}\label{eq:lipshitz}
    K_L ||\mathbf{x_1-x_2}|| \leq ||H(\mathbf{x_1})-H(\mathbf{x_2})|| \leq K_U ||\mathbf{x_1-x_2}||,
\end{align}
$\forall~(\mathbf{x_1,x_2}) \in \mathbb{R}^d$. Note that we no longer assume the implementation of $H$ as in the toy model; the continuity assumption here is only needed to ensure that $H$ captures perceptual similarity, i.e., similar images should indeed have similar features. Furthermore, since $H$ is now continuous, we extend to a threshold based detection setting i.e. a query $\mathbf{x}$ is considered an attack query if and only if $||H(\mathbf{x})-H(\mathbf{x_h})|| \leq \tau$ where $\mathbf{x_h}$ is any historical query. Given these changes, we can now re-analyze the detection $\alpha^{det}$ for a perturbed query $\mathbf{x} + \boldsymbol{\delta}$:

\begin{theorem}\label{theorem:general}
{Let the adversary sample natural image $\mathbf{x}$, and perturb it with $\boldsymbol{\delta} \sim \mathcal{N}(0, \mathbf{I_d}\beta^2)$ to estimate a gradient. For a false positive rate $\alpha^{fp}$, the detection rate $\alpha^{det}$ for perturbed query $\mathbf{x} + \boldsymbol{\delta}$ is then bounded as follows:}
\begin{align}
    \alpha^{det} \leq \frac{1}{\Gamma(\frac{d}{2})} \gamma\left(\frac{d}{2},\frac{1}{2}\left(\frac{K_U}{K_L}\frac{M_\mathcal{D}}{\beta}\frac{1}{1-\alpha^{fp}}\right)^2\right)
\end{align}
where $M_{\mathcal{D}} = {\mathbb{E}}[||\mathbf{x_1}-\mathbf{x_2}|| ]$, i.e., the expected spread of natural queries, and $\gamma$ and $\Gamma$ are the monotonic lower incomplete and complete Gamma functions respectively.
\end{theorem}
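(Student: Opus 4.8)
The plan is to mirror the toy-model proof, replacing its explicit quantization arguments with the two-sided Lipschitz bound of Equation~\eqref{eq:lipshitz}. First I would identify the detection event for the attack query $\mathbf{x}+\boldsymbol{\delta}$: as in the toy model, the relevant historical query is the natural image $\mathbf{x}$ from which $\boldsymbol{\delta}$ perturbs, so detection occurs exactly when $\|H(\mathbf{x}+\boldsymbol{\delta})-H(\mathbf{x})\|\le\tau$, i.e. $\alpha^{det}=\mathbb{P}[\|H(\mathbf{x}+\boldsymbol{\delta})-H(\mathbf{x})\|\le\tau]$. The lower Lipschitz inequality $K_L\|\boldsymbol{\delta}\|\le\|H(\mathbf{x}+\boldsymbol{\delta})-H(\mathbf{x})\|$ then forces $\|\boldsymbol{\delta}\|\le\tau/K_L$ whenever detection happens, so $\alpha^{det}\le\mathbb{P}[\|\boldsymbol{\delta}\|\le\tau/K_L]$. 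Since $\boldsymbol{\delta}\sim\mathcal{N}(0,\mathbf{I_d}\beta^2)$, the quantity $\|\boldsymbol{\delta}\|^2/\beta^2$ is $\chi^2_d$-distributed, and the $\chi^2_d$ CDF is the regularized lower incomplete Gamma function, so $\mathbb{P}[\|\boldsymbol{\delta}\|\le\tau/K_L]=\tfrac{1}{\Gamma(d/2)}\gamma\!\left(\tfrac{d}{2},\tfrac{\tau^2}{2K_L^2\beta^2}\right)$.

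The remaining ingredient is an upper bound on $\tau$ in terms of $\alpha^{fp}$, which is where the upper Lipschitz constant and the distributional quantity $M_{\mathcal{D}}$ enter. A false positive occurs when two natural queries $\mathbf{x_1},\mathbf{x_2}$ satisfy $\|H(\mathbf{x_1})-H(\mathbf{x_2})\|\le\tau$; by the upper Lipschitz bound this is implied by $\|\mathbf{x_1}-\mathbf{x_2}\|\le\tau/K_U$, so $\alpha^{fp}\ge\mathbb{P}[\|\mathbf{x_1}-\mathbf{x_2}\|\le\tau/K_U]$. Applying Markov's inequality to the nonnegative random variable $\|\mathbf{x_1}-\mathbf{x_2}\|$, whose expectation is $M_{\mathcal{D}}$, gives $\alpha^{fp}\ge 1-K_U M_{\mathcal{D}}/\tau$, hence $\tau\le K_U M_{\mathcal{D}}/(1-\alpha^{fp})$. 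Substituting this into the expression from the first paragraph and using that $\gamma(d/2,\cdot)$ is increasing in its second argument turns $\tfrac{\tau^2}{2K_L^2\beta^2}$ into $\tfrac{1}{2}\big(\tfrac{K_U}{K_L}\tfrac{M_{\mathcal{D}}}{\beta}\tfrac{1}{1-\alpha^{fp}}\big)^2$, yielding the claimed inequality.

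I expect the main obstacle to be the bookkeeping in the $\tau$-versus-$\alpha^{fp}$ step rather than any deep difficulty: one must apply the Lipschitz constant on the correct side in each direction ($K_L$ as a lower bound for the detection event, $K_U$ as an upper bound for the false-positive event), and since the only assumption on the natural-image distribution is its mean pairwise distance $M_{\mathcal{D}}$, a first-moment bound such as Markov's inequality is essentially the only tool available; anything sharper would require additional distributional structure. The $\chi^2$ identification and the Gamma-function rewriting are routine, and the one modeling assumption worth flagging explicitly is, as in Theorem~\ref{theorem:toy}, that the detection of $\mathbf{x}+\boldsymbol{\delta}$ is assessed against $\mathbf{x}$.
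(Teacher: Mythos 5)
Your proposal is correct and follows essentially the same route as the paper's proof: identify the detection and false-positive events, apply the lower and upper Lipschitz bounds respectively to get $\alpha^{det}\le\mathbb{P}[\|\boldsymbol{\delta}\|\le\tau/K_L]$ and $\alpha^{fp}\ge\mathbb{P}[\|\mathbf{x_1}-\mathbf{x_2}\|\le\tau/K_U]$, then use the chi-distribution CDF and Markov's inequality to eliminate $\tau$. In fact you spell out the Markov step and the $\tau\le K_U M_{\mathcal{D}}/(1-\alpha^{fp})$ substitution more explicitly than the paper does.
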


\begin{proof}[Proof] $H$ fails to detect the attack query $\mathbf{x + \boldsymbol{\delta}}$ if and only if $||H(\mathbf{x}) - H(\mathbf{x+\boldsymbol{\delta}})|| > \tau$. Therefore,
\begin{align}\label{eq:general_alpha_det}
    \alpha^{det} = {\mathbb{P}}\left[ ||H(\mathbf{x}) - H(\mathbf{x+\boldsymbol{\delta}})|| \leq \tau \right]
\end{align}
Similarly, $H$ produces a false positive for two natural images $\mathbf{x_1}$ and $\mathbf{x_2}$ if and only if $||H(\mathbf{x_1}) - H(\mathbf{x_2})|| \leq \tau$. Therefore,
\begin{align}\label{eq:general_alpha_fp}
    \alpha^{fp} = {\mathbb{P}}\left[ ||H(\mathbf{x_1}) - H(\mathbf{x_2})|| \leq \tau \right]
\end{align}
Using Equation~\ref{eq:lipshitz} with~\ref{eq:general_alpha_det} and~\ref{eq:general_alpha_fp}:
\begin{align}\label{eq:det}
    \alpha^{det} \leq {\mathbb{P}}\left[ ||\mathbf{\boldsymbol{\delta}}|| \leq \frac{\tau}{K_L} \right]
\end{align}
\begin{align}\label{eq:fp}
    \alpha^{fp} \geq {\mathbb{P}}\left[ ||\mathbf{x_1} - \mathbf{x_2}|| \leq \frac{\tau}{K_U} \right]
\end{align}


Finally, using a CDF for the norm of a Gaussian, i.e., a chi-distribution in Equation~\ref{eq:det} and Markov's inequality in Equation~\ref{eq:fp}, we get:
\begin{align}
    \alpha^{det} \leq \frac{1}{\Gamma(\frac{d}{2})} \gamma\left(\frac{d}{2},\frac{1}{2}\left(\frac{K_U}{K_L}\frac{M_\mathcal{D}}{\beta}\frac{1}{1-\alpha^{fp}}\right)^2\right)
\end{align}
where: 
\begin{align}
    \gamma(s,x) = \int_0^x t^{s-1}e^{-t}dt\\
    \Gamma(s) = \int_0^\infty t^{s-1}e^{-t}dt
\end{align}
\end{proof}

\pitfallenv{Takeaway}
{\label{takeaway_general} The trade-off observed in the toy model also extends to the more general setting, i.e., the detection rate $\alpha^{det}$ and the false positive rate $\alpha^{fp}$ are still at odds with each other. Furthermore, this trade-off depends upon the standard deviation $\beta$ of the perturbation distribution, the expected spread $M_{\mathcal{D}}$ of natural queries, and the Lipschitz constant ratio $K_U/K_L$ of $H$.}

\section{Experiments}
Motivated by our analysis in Section~\ref{sec:tradeoff}, we conduct experiments to validate our findings empirically, and thus answer the following questions:

\noindent\textbf{Q1. How does the trade-off empirically depend upon the spread, i.e., variance $\beta$ of the attack queries?}

\noindent\textbf{Q2. How does the trade-off empirically depend upon the Lipschitz constant ratio $K_U / K_L$ of the feature extractor?}

\noindent\textbf{Q3. What are the implications of the trade-off for the convergence of black-box attacks?}

\subsection{Experimental Setup}
\noindent\textbf{Feature extractors.} We focus our evaluation on feature extractors from two state-of-the-art stateful defenses: Blacklight~\cite{li2022blacklight} and PIHA~\cite{choi2023piha}. Below we provide detailed descriptions and hyper-parameters for both.

Blacklight operates on an input image with pixel values in the range of [0, 255]. First, it discretizes the pixels into bins of size 50. Second, a sliding window technique is applied to the discretized image, utilizing a window size of 20 for TinyImages~\cite{torralba200880} and 50 for ImageNet~\cite{russakovsky2015imagenet}. During this process, each window is hashed using the SHA-256 algorithm. Finally, the resulting set of hashes obtained from all the windows is considered as the ``feature'' for the image. For efficiency purposes, Blacklight utilizes only the top 50 hashes. To quantify the distance between two hash sets, Blacklight computes the number of non-common hashes, which can be interpreted as an $\ell_1$ distance. 

PIHA also operates on input images with the same pixel range. First, it runs a 3x3 low-pass Gaussian filter with standard deviation 1 over the image. Second, the image is converted to the HSV color space with the S and V components discarded. Finally, PIHA runs a sum-pooling operation over 7x7 image blocks, and the ``feature'' is computed as the output of the local binary pattern algorithm~\cite{ojala1994performance} on the sum-pooled image.

\noindent\textbf{Datasets}. We evaluate Blacklight and PIHA using two datasets, TinyImages and ImageNet. The TinyImages dataset is a collection of 32x32 images and is the superset collection from which the popular CIFAR-10 dataset is sampled (providing nearly 80 million images as opposed to only 60,000). The ImageNet dataset comprises over 1 million 256x256 images. We sample a random subset of 1 million images from both datasets for our experiments.

\begin{figure*}[t]
  \centering

  \subfigure{%
    \includegraphics[width=0.25\textwidth]{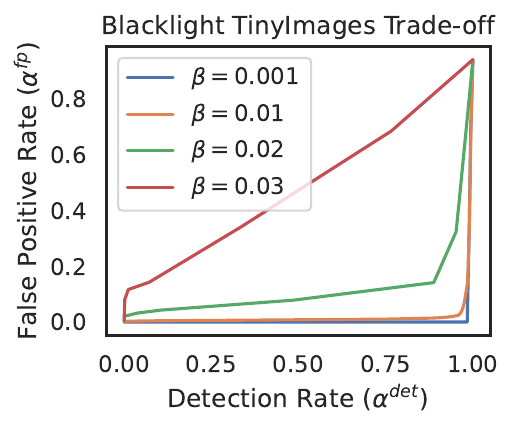}
    \label{fig:subfigure1}
  }%
  \subfigure{%
    \includegraphics[width=0.25\textwidth]{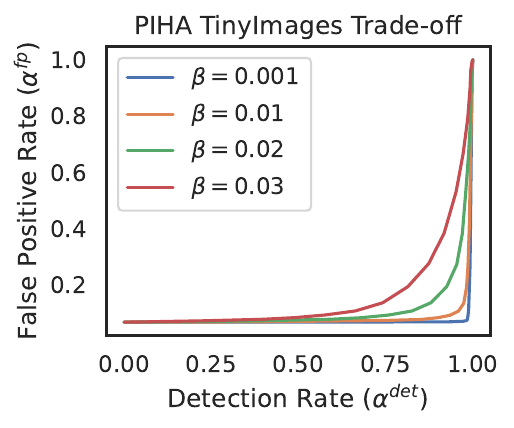}
    \label{fig:subfigure2}
  }%
  \subfigure{%
    \includegraphics[width=0.25\textwidth]{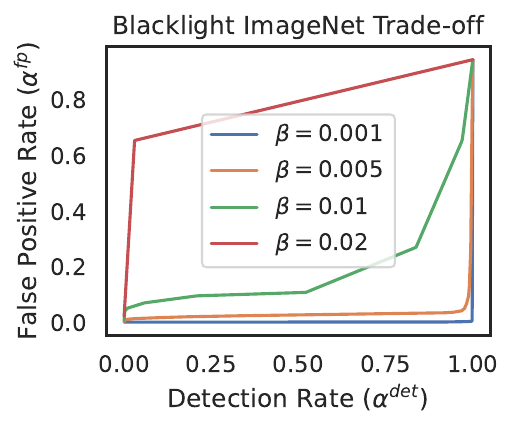}
    \label{fig:subfigure2}
  }%
  \subfigure{%
    \includegraphics[width=0.25\textwidth]{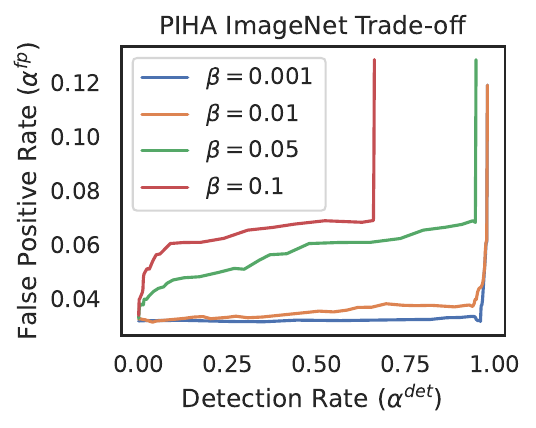}
    \label{fig:subfigure2}
  }%

  \caption{\textit{There exists a trade-off between detection rate $\alpha^{det}$ and false positive rate $\alpha^{fp}$ for stateful defenses. This trade-off is worsened for larger $\beta$ values.} Each curve is computed by varying threshold $\tau$ for the chosen feature extractor, and each setting presents four curves corresponding to different $\beta$ values.}
  \label{fig:beta}
\end{figure*}

\subsection{Q1. Variance of Attack Queries}\label{sec:q1} 
Theorem~\ref{theorem:general} suggests a clear inverse relationship between the ($\alpha^{det}$, $\alpha^{fp}$) trade-off and $\beta$. We now empirically validate this relationship, i.e., for any given feature extractor and dataset, we plot $\alpha^{fp}$ against $\alpha^{det}$ for a variety of thresholds $\tau$. We compute $\alpha^{fp}$ over 1 million images for all settings except PIHA on ImageNet, for which we compute on $100$k and extrapolate due to computational complexity. We compute $\alpha^{det}$ over 100 images by sampling perturbations from Gaussians with different standard deviations $\beta$. 

Results are presented in Figure~\ref{fig:beta}. Notably, we first observe that for any $\beta$, the trade-off between $\alpha^{det}$ and $\alpha^{fp}$ indeed exists across all thresholds.  More specifically, to obtain a larger $\alpha^{det}$ always requires an increase in $\alpha^{fp}$ as well. This validates the takeaways from Theorems~\ref{theorem:toy} and~\ref{theorem:general}. Furthermore, the inverse relationship with $\beta$ also exists, i.e., achieving the same $\alpha^{det}$ requires a larger $\alpha^{fp}$ when $\beta$ is increased. Interestingly, PIHA can achieve higher $\alpha^{det}$ on the low-dimensional TinyImages compared to Blacklight, but both suffer on ImageNet when $\beta$ increases beyond $\beta=0.01$. 

\subsection{Q2. Lipschitz Constants of the Feature Extractor}
Theorem \ref{theorem:general} also suggests that the ($\alpha^{det}$, $\alpha^{fp}$) trade-off is influenced by Lipschitz constants $K_U$ and $K_L$ of the feature extractor. However, this assumes a continuous feature extractor --- although the feature extractors from Blacklight and PIHA are not continuous, they are still designed to approximate the perceptual likeness of images (yielding closer features for similar queries and further features for dissimilar ones). Given the lack of closed-form expressions, we resort to an empirical estimation of $K_U$ and $K_L$.

We create image pairs $\mathbf{x}$ and $\mathbf{x}+\boldsymbol{\delta}$ where $\mathbf{x}$ is sampled from the dataset (TinyImages/ImageNet), and $\boldsymbol{\delta} \sim \mathcal{N}(0,\mathbf{I_d}\beta^2); \beta = 0.01$. For each pair, we then calculate the ratio between the $\ell_2$ distance in the feature space and the input space, i.e., $\frac{||H(\mathbf{x}) - H(\mathbf{x}+\boldsymbol{\delta})||}{||\boldsymbol{\delta}||}$. We construct $10000$ such pairs and plot the distribution of these distance ratios. 

Figure~\ref{fig:lipschitz} plots these distributions for ImageNet images processed by both Blacklight and PIHA feature extractors. We note a larger distribution spread in the histogram for PIHA compared to Blacklight, hinting at a greater value for $\frac{K_U}{K_L}$ for PIHA. As per Theorem~\ref{theorem:general}, this suggests that PIHA possesses the potential for superior detection rates compared to Blacklight. We corroborate this empirically by plotting $\alpha^{fp}$ against $\alpha^{det}$ in a manner akin to that in Section \ref{sec:q1}. As presented in Figure \ref{fig:lip_tradeoff}, PIHA indeed manifests higher detection rates when compared with Blacklight.

\begin{figure}[htbp]
  \centering

  \subfigure{%
    \includegraphics[width=0.23\textwidth]{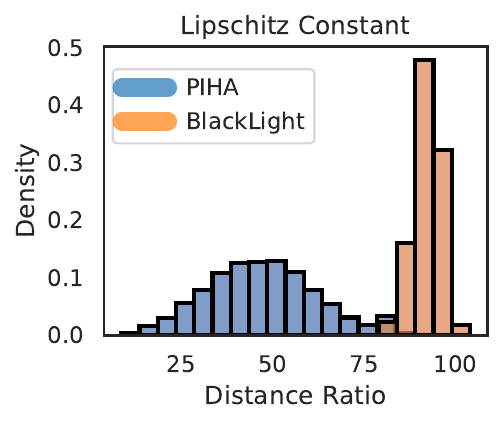}
    \label{fig:lipschitz}
  }%
  \subfigure{%
    \includegraphics[width=0.26\textwidth]{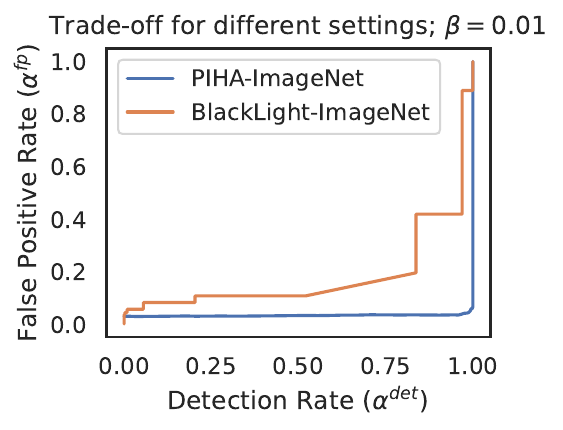}
    \label{fig:lip_tradeoff}
  }%

  \caption{\textit{Lipschitz constant ratio of the feature extractors is directly proportional to the quality of the trade-off.} On the left, we present the distribution of ratios between pairwise distance in the feature space and pairwise distance in the input space --- a larger distribution spread implies a larger Lipschitz ratio for that feature extractor. On the right, we present the corresponding ($\alpha^{det}$, $\alpha^{fp}$) trade-off. }
  \label{fig:subplots}
\end{figure}

\begin{figure*}[htbp]
  \centering

  \subfigure{%
    \includegraphics[width=0.25\textwidth]{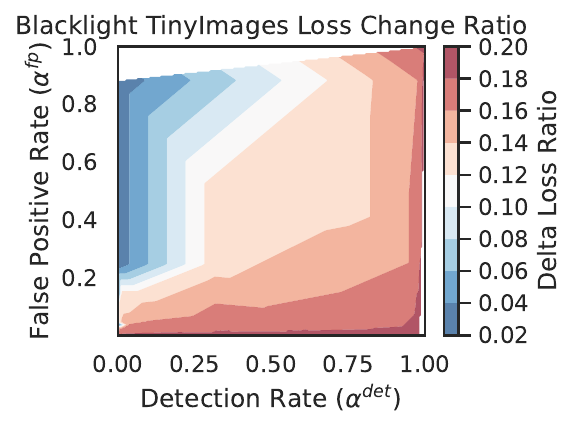}
    \label{fig:subfigure1}
  }%
  \subfigure{%
    \includegraphics[width=0.25\textwidth]{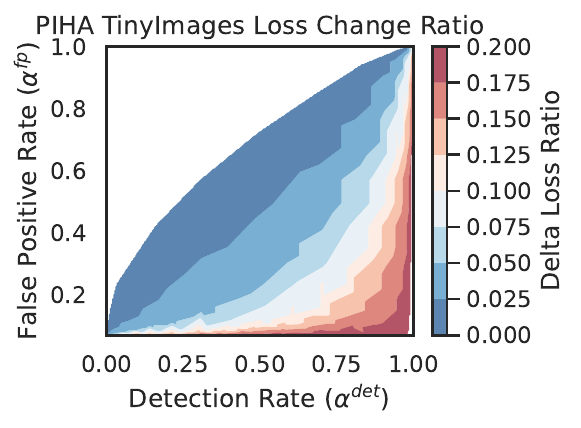}
    \label{fig:subfigure2}
  }%
  \subfigure{%
    \includegraphics[width=0.25\textwidth]{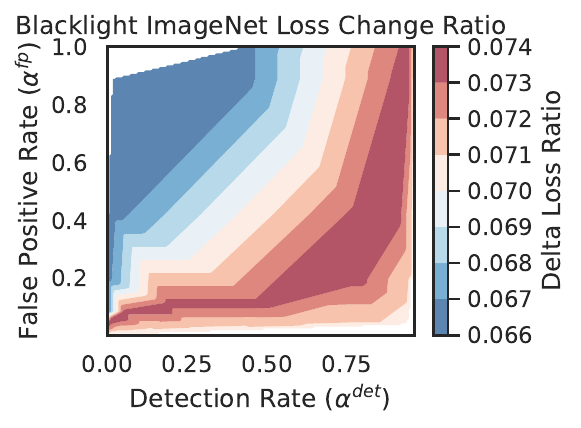}
    \label{fig:subfigure2}
  }%
  \subfigure{%
    \includegraphics[width=0.25\textwidth]{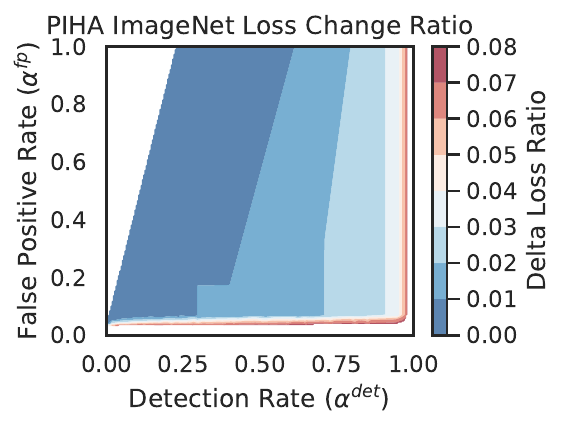}
    \label{fig:subfigure2}
  }%

  \caption{\textit{Even though the ($\alpha^{det}$, $\alpha^{fp}$) trade-off is worsened for larger $\beta$ values, the larger $\beta$ values also produce an inferior direction which does not increase the loss much.} Each shaded region corresponds to the change in loss induced by a gradient estimated with different $\beta$ values. Red regions imply a large increase in loss, and blue imply a small increase.}
  \label{fig:change_of_loss}
\end{figure*}

\subsection{Q2. The Trade-off and Attack Convergence}
Given that increasing $\beta$ worsened the trade-off of the defense (Q1 in Section~\ref{sec:q1}), we now question the impact of increasing $\beta$ on the attack convergence itself. We specifically consider the adversary goal of gradient estimation via finite differences. Formally, it can be shown through the following result that increasing $\beta$ should worsen the quality of the estimated gradient:

\begin{theorem}\label{lemma:min_var_grad} 
Let $\nabla_x$ be the true gradient of $\mathbf{x}$ for the classifier's loss, and $G$ be a matrix of rows $g_1, \cdots, g_k \sim \mathcal{N}(0, \mathbf{I_d}\beta^2)$. Then, the norm of estimated gradient $G\cdot\nabla_\mathbf{x} $ is bounded in probability by:
\begin{equation*}
\begin{split}
    \mathbb{P}[(1-\epsilon)\|\nabla_\mathbf{x}\| \leq \|G\cdot\nabla_\mathbf{x}\| & \leq (1+\epsilon) \|\nabla_\mathbf{x}\|] \geq  \\
     & 1 - 2\cdot exp{\left(- k - \dfrac{1 + \epsilon}{2\beta^2}\right)}
\end{split} 
\end{equation*}
where $0 \leq \epsilon \leq 1$ is the estimation error.
\end{theorem}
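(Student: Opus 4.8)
The plan is to reduce the claim to a two–sided concentration statement for a chi–squared random variable and then apply a Chernoff / moment–generating–function tail bound, exactly as chi–type distributions already showed up in the proof of Theorem~\ref{theorem:general}.

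First I would unpack the object $G\cdot\nabla_\mathbf{x}$. Since $G$ has rows $g_1,\dots,g_k$, the vector $G\cdot\nabla_\mathbf{x}\in\mathbb{R}^k$ has $i$-th coordinate $\langle g_i,\nabla_\mathbf{x}\rangle$, which, because $g_i\sim\mathcal{N}(0,\mathbf{I_d}\beta^2)$, is a centered Gaussian of variance $\beta^2\|\nabla_\mathbf{x}\|^2$; the coordinates are independent because the rows are. Hence $\|G\cdot\nabla_\mathbf{x}\|^2 = \beta^2\|\nabla_\mathbf{x}\|^2\,Y$ where $Y:=\sum_{i=1}^k z_i^2$ with $z_i\sim\mathcal{N}(0,1)$ i.i.d., i.e. $Y\sim\chi^2_k$. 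Dividing the target inequalities by $\|\nabla_\mathbf{x}\|$ and squaring, the event $(1-\epsilon)\|\nabla_\mathbf{x}\|\le\|G\cdot\nabla_\mathbf{x}\|\le(1+\epsilon)\|\nabla_\mathbf{x}\|$ is exactly $\tfrac{(1-\epsilon)^2}{\beta^2}\le Y\le\tfrac{(1+\epsilon)^2}{\beta^2}$, so it suffices to bound $\mathbb{P}\big[Y>\tfrac{(1+\epsilon)^2}{\beta^2}\big]+\mathbb{P}\big[Y<\tfrac{(1-\epsilon)^2}{\beta^2}\big]$; the factor $2$ in the claimed bound will come from treating these two one–sided tails with the same estimate.

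Next I would bound each tail with the chi–squared MGF, $\mathbb{E}[e^{\lambda Y}]=(1-2\lambda)^{-k/2}$ for $\lambda<\tfrac12$ (and the analogue $\mathbb{E}[e^{-\lambda Y}]=(1+2\lambda)^{-k/2}$ for the lower tail). Applying Markov to $e^{\pm\lambda Y}$ and choosing $\lambda$ — in particular pushing $\lambda$ toward $\tfrac12$ so that the $e^{-\lambda t}$ factor produces the $\tfrac{1}{2\beta^2}$–scale term while the $(1-2\lambda)^{-k/2}$ factor contributes the $e^{-k}$ term — should yield a bound of the form $e^{-k}\cdot e^{-(1\pm\epsilon)^2/(2\beta^2)}$ for each tail. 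Finally, using $(1+\epsilon)^2\ge 1+\epsilon$ and $(1-\epsilon)^2\le 1-\epsilon$ for $\epsilon\in[0,1]$ lets one replace the squared factors and unify both tails under the single weaker term $e^{-k-(1+\epsilon)/(2\beta^2)}$, then add the two contributions.

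I expect the main obstacle to be the last step: the reduction to $\chi^2_k$ is routine, but making the constants land exactly on $\exp\!\big(-k-\tfrac{1+\epsilon}{2\beta^2}\big)$ requires a deliberately non–optimal Chernoff parameter (the variance–optimal choice $\lambda^\ast=\tfrac12(1-k/t)$ does not produce this clean form) together with a careful check that the regime of $k,\beta,\epsilon$ in which the estimate is meant to be applied keeps the bound non–vacuous; all of this is elementary but fiddly.
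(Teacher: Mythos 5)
Your reduction to a $\chi^2_k$ variable is the same first move as the paper's (its appendix lemma uses rotational invariance to get $\|Gv\|^2 \stackrel{D}{=} \beta^2\chi_k^2$ for unit $v$, then applies Chernoff's method), and your instinct that the constants are ``fiddly'' and that one must check non-vacuousness is exactly where the argument breaks --- but the break is fatal, not cosmetic. The per-tail targets you set, $e^{-k}e^{-(1\pm\epsilon)^2/(2\beta^2)}$, are not attainable by any Chernoff parameter: for the upper tail $\mathbb{P}[\chi^2_k > a]$ with $a=(1+\epsilon)^2/\beta^2$ the optimal exponent is $-\tfrac12\left(a - k - k\log(a/k)\right)$, which exceeds your target $-k-a/2$ by $\tfrac{3k}{2}+\tfrac{k}{2}\log(a/k)>0$ whenever $a>k$ (the only regime in which that tail is small at all), so your target sits strictly below the true tail probability. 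The lower tail is worse: $\mathbb{P}[\chi^2_k < (1-\epsilon)^2/\beta^2]$ tends to $1$ as $\beta\to 0$ with $k$ fixed, so no bound of the form $e^{-k-c/\beta^2}$ can hold for it. And your final unification step points the wrong way: $(1-\epsilon)^2\le 1-\epsilon$ gives $e^{-(1-\epsilon)^2/(2\beta^2)} \ge e^{-(1+\epsilon)/(2\beta^2)}$, so the lower-tail term cannot be absorbed into $e^{-k-(1+\epsilon)/(2\beta^2)}$. (The paper avoids this last issue by enlarging the interval to $[1-\epsilon,1+\epsilon]$ \emph{before} bounding, rather than weakening afterwards.)

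The underlying reason you cannot make the constants land is that the stated theorem is false. Take $\epsilon=0$ (permitted, since $0\le\epsilon\le 1$): the event becomes $\|G\cdot\nabla_{\mathbf{x}}\| = \|\nabla_{\mathbf{x}}\|$, which has probability zero, while the claimed lower bound $1-2\exp(-k-1/(2\beta^2))$ is positive. More generally $\mathbb{E}\|G\cdot\nabla_{\mathbf{x}}\|^2 = k\beta^2\|\nabla_{\mathbf{x}}\|^2$, so unless $k\beta^2\approx 1$ the event is itself a large deviation and its probability goes to $0$, not $1$, as $k$ grows. The paper's own derivation reaches the printed form only through errors: it doubles the upper-tail MGF to cover both tails (the lower tail needs $\mathbb{E}[e^{-t\chi^2}]$, a different function), it implicitly uses $\log u \ge u-1$ in passing to $2e^{-(\epsilon+1-\sigma^2k)^2/(2\sigma^2(1+\epsilon))}$ (that inequality goes the other way), and in the penultimate line it converts $-\frac{\epsilon+1-2\sigma^2k}{2\sigma^2} = -\frac{\epsilon+1}{2\sigma^2}+k$ into $-k-\frac{\epsilon+1}{2\sigma^2}$, a sign error on $k$. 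A correct statement along these lines is a Johnson--Lindenstrauss-type bound with row variance $1/k$ and an exponent of order $k\epsilon^2$; that, rather than the printed constants, is what your Chernoff computation will actually deliver.
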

A detailed proof of this result can be found in Appendix~\ref{app:proof_min_grad_var}. The left-hand side represents the probability that our estimated gradient is ``good'', i.e., produces the same increase-in-loss as the true gradient. As $\beta$ increases, the lower bound on this probability decreases (right-hand side), suggesting that the estimate is less likely to produce the same increase-in-loss.

We empirically validate this impact of increasing $\beta$ in Figure~\ref{fig:change_of_loss}, which plots the increase in loss when following gradients estimated with different $\beta$. These figures present a clearer overall picture --- for any given $\alpha^{fp}$, even though larger $\beta$ decreases the detection rate, a gradient estimated with larger $\beta$ is also strictly worse for the adversary, i.e., does not increase the loss as much (see the gradation from red to blue). In other words, these findings suggest that the worsening of the ($\alpha^{det}$, $\alpha^{fp}$) trade-off at larger $\beta$ is not without a negative impact on the adversary.

\section{Conclusion}
In conclusion, our work offers a more formal understanding of how stateful defenses prevent black-box adversarial attacks. We outlined a crucial trade-off between detecting attack detection and false positives, and highlighted its dependence upon the distribution of attack and natural queries, and the properties of the defense's feature extractor. Our analysis can help illuminate why certain defenses perform better against black-box attacks, which can help to refine current strategies and potentially guide the design of future defenses. As the landscape of adversarial attacks and defenses evolves, our findings contribute to the development of more robust and resilient machine learning models under the realistic black-box threat model.
\section{Acknowledgements}
This material is based upon work supported by DARPA under agreement number 885000, National Science Foundation Grant No. 2039445, and National Science Foundation Graduate Research Fellowship Grant No. DGE 1841052. Any opinion, findings, and conclusions or recommendations expressed in this material are those of the authors(s) and do not necessarily reflect the views of our research sponsors.

\nocite{langley00}

\bibliography{example_paper}
\bibliographystyle{icml2023}

\newpage
\clearpage
\appendix
\section{Supplementary Proofs}\label{sec:concentration}

\begin{lemma}
Let $G$ be a $k \times d$ random matrix with rows $\sigma g^i \sim \mathcal{N}(0, \mathbf{I_d}\sigma^2) ~\forall 1 \leq i \leq k$. Then, for any unit vector $v \in \mathbb{R}^d$,
\begin{equation*}
    P[|\|Gv\|^2 - 1| > \epsilon] \leq 2 exp\left(-\left(k + \dfrac{\epsilon + 1}{2 \sigma^2}\right)\right)
\end{equation*}
\label{lemma:concn_msur}
\end{lemma}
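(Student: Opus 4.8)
The plan is to prove a Johnson--Lindenstrauss-type concentration bound for $\|Gv\|^2$ where $G$ has i.i.d.\ Gaussian rows scaled by $\sigma$. First I would observe that since $v$ is a fixed unit vector, each coordinate $(Gv)_i = \sigma \langle g^i, v\rangle$ is distributed as $\mathcal{N}(0,\sigma^2)$, and the $k$ coordinates are independent because the rows $g^i$ are independent. Hence $\|Gv\|^2 = \sigma^2 \sum_{i=1}^k Z_i^2$ where $Z_i \sim \mathcal{N}(0,1)$ are i.i.d., i.e.\ $\sigma^{-2}\|Gv\|^2 \sim \chi^2_k$. The statement as written (comparing $\|Gv\|^2$ to $1$ rather than to $k\sigma^2$) appears to implicitly fold the normalization into the choice of $\sigma$ or into how the rows are indexed; I would state explicitly that we work with the centered quantity $\|Gv\|^2$ and track constants, then reconcile with the displayed bound at the end.

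The core of the argument is a standard Chernoff/Bernstein bound for a chi-squared random variable. I would write $S = \sum_{i=1}^k Z_i^2$ and use the moment generating function $\mathbb{E}[e^{tZ_i^2}] = (1-2t)^{-1/2}$ for $t < 1/2$, so $\mathbb{E}[e^{tS}] = (1-2t)^{-k/2}$. Applying Markov's inequality to $e^{tS}$ for the upper tail and to $e^{-tS}$ for the lower tail, then optimizing over $t$, yields the classical Laurent--Massart bounds: $\mathbb{P}[S \geq k + 2\sqrt{kx} + 2x] \leq e^{-x}$ and $\mathbb{P}[S \leq k - 2\sqrt{kx}] \leq e^{-x}$. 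I would then translate the deviation parameter: to get a relative deviation of $\epsilon$ in $\|Gv\|^2$, one sets $x$ appropriately and unions the two tails, picking up the factor of $2$ in front of the exponential. The remaining bookkeeping is to show that the resulting exponent dominates (is at least as negative as) $-(k + \frac{\epsilon+1}{2\sigma^2})$, which follows by crude lower bounds on $x$ in terms of $k$, $\epsilon$, and $\sigma$, using $0 \le \epsilon \le 1$.

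The main obstacle I anticipate is reconciling the exact form of the exponent in the claimed bound, $k + \frac{\epsilon+1}{2\sigma^2}$, with what the clean chi-squared tail bound naturally produces. A Laurent--Massart bound gives an exponent that is a function of $\epsilon^2 k$ (for small $\epsilon$) or $\epsilon k$ (for larger $\epsilon$), not obviously of the form $k + \epsilon/\sigma^2$; so I expect the intended route is slightly different --- likely bounding the MGF directly without optimizing tightly, e.g.\ choosing a fixed $t$ (such as $t$ near $1/2$ or $t$ proportional to $1/\sigma^2$ after rescaling) so that the two terms $k$ and $\frac{\epsilon+1}{2\sigma^2}$ emerge separately, one from the $(1-2t)^{-k/2}$ factor and one from the $t\epsilon$ savings in the Markov step. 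I would reverse-engineer the right choice of $t$ from the target exponent: set up $\mathbb{P}[\|Gv\|^2 > 1+\epsilon] \le e^{-t(1+\epsilon)}\mathbb{E}[e^{t\|Gv\|^2/\sigma^2 \cdot \sigma^2}]$ and pick $t$ to make the algebra collapse to the stated form, then verify the constraint $t<1/(2\sigma^2)$ holds and that the same $t$ (with sign flipped) handles the lower tail. Once the correct $t$ is identified the rest is routine algebra.

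Note: the lemma statement has a scaling that looks slightly off relative to a literal $\chi^2_k$ computation (one would expect $\|Gv\|^2$ to concentrate around $k\sigma^2$, not $1$); I would add a sentence clarifying the normalization convention being used here so the reader is not confused, and then carry out the computation under that convention.
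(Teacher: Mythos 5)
Your overall strategy is the same as the paper's: both reduce $\|Gv\|^2$ in distribution to $\sigma^2\chi^2_k$ (the paper via rotational invariance, you via the equivalent observation that the coordinates $(Gv)_i$ are i.i.d.\ $\mathcal{N}(0,\sigma^2)$) and then run a Chernoff/MGF argument on the chi-squared variable. Your side remark about normalization is also well taken: the event $|\|Gv\|^2-1|>\epsilon$ is only a small-deviation event when $k\sigma^2$ is close to $1$, and indeed the optimizing parameter $t^\ast=\tfrac{1}{2\sigma^2}-\tfrac{k}{2(\epsilon+1)}$ in the Chernoff step is positive only when $k\sigma^2<\epsilon+1$; this hypothesis is implicit in the paper and should be stated.

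The genuine gap is exactly at the point you label an ``obstacle'' and defer: you never exhibit the choice of $t$ or the chain of inequalities that turns the Chernoff exponent into $-\bigl(k+\tfrac{\epsilon+1}{2\sigma^2}\bigr)$, and you propose to reverse-engineer it later as routine algebra. That step is the entire content of the lemma, and your suspicion that the natural chi-squared tail exponent does not reduce to this form is justified. Plugging $t^\ast$ into $e^{-(\epsilon+1)t}(1-2\sigma^2 t)^{-k/2}$ gives exponent $-\tfrac{\epsilon+1}{2\sigma^2}+\tfrac{k}{2}-\tfrac{k}{2}\log\tfrac{k\sigma^2}{\epsilon+1}$, and the paper's own simplifications lead to $2\exp\bigl(-\tfrac{\epsilon+1-2\sigma^2k}{2\sigma^2}\bigr)=2e^{+k}e^{-\frac{\epsilon+1}{2\sigma^2}}$; the paper's final line replaces $e^{+k}$ by $e^{-k}$, which reverses the inequality, so the displayed bound does not actually follow from this route. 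Consequently, your plan of ``picking $t$ so the algebra collapses to the stated form'' cannot succeed as stated: a correct completion must either settle for the weaker factor $e^{+k}$ (or keep the tight exponent $-\tfrac{(\epsilon+1-k\sigma^2)^2}{2\sigma^2(1+\epsilon)}$), or impose the normalization $k\sigma^2=1$ and prove a standard Johnson--Lindenstrauss-type bound such as $2\exp\bigl(-\tfrac{k}{4}(\epsilon^2-\epsilon^3)\bigr)$. As written, your proposal sets up the problem correctly but does not prove the stated inequality, and should flag that the inequality itself needs to be repaired rather than merely reconciled.
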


\begin{proof} Note that by rotational invariance of Gaussians, $Gv \stackrel{D}{=} Ge^1$, where $e^1$ is the standard basis vector. This implies that $\|Gv\|^2 \stackrel{D}{=} \|Ge^1\|^2 \stackrel{D}{=} \sigma^2 \chi_k^2$, where $\chi_k^2$ is a chi-square random variable with $k$-degrees of freedom. Then, by Chernoff's bounding method:

\begin{equation*}
\begin{split}
    P[|\|Gv\|^2 - 1| > & \epsilon] = P[|\sigma^2 \chi_k^2 - 1| > \epsilon] \\ \\
    &\leq 2 \inf\limits_{t>0}e^{-\epsilon t}\mathbb{E}[e^{t(\sigma^2\chi^2-1)}] \\
    &\leq 2 \inf_{t>0} e^{-\epsilon t - t} \mathbb{E}[e^{t\sigma^2 \chi^2}] \\
    & \leq 2 \inf_{t>0} e^{-\epsilon t - t}(1-2\sigma^2 t)^{\frac{-k}{2}} \\
    & \leq 2 \inf_{t > 0} e^{-\epsilon t - t - \frac{k}{2} \log(1-2\sigma^2 t)} \\
    & \leq 2 e^{- \frac{\epsilon - k \sigma^2 + 1}{2\sigma^2} - \frac{k}{2}\log(\frac{\sigma^2 k}{\epsilon + 1})} \\
    & \leq 2 e^{- \frac{(\epsilon + 1 -\sigma^2 k)^2}{2\sigma^2(1+\epsilon)}} \\
    & \leq 2e^{- \frac{(\epsilon+1)^2 - 2\sigma^2 k(\epsilon+1)}{2\sigma^2 (1+\epsilon)}} \\
    & \leq 2e^{- \frac{\epsilon+1 - 2\sigma^2 k}{2\sigma^2}} \\
    & \leq 2e^{-k}e^{-\frac{\epsilon+1}{2\sigma^2}} \\
    & \leq 2e^{-k - \frac{\epsilon+1}{2\sigma^2}}
\end{split}
\end{equation*}
\end{proof}

\subsubsection{Proof for Theorem~\ref{lemma:min_var_grad}}\label{app:proof_min_grad_var}
Let $\nabla_\mathbf{x}$ be the true gradient of $\mathbf{x}$ for the classifier's loss, and $G$ be a matrix of rows $g_1, \cdots, g_k \sim \mathcal{N}(0, \mathbf{I_d}\beta^2)$. Then, the norm of estimated gradient $G\cdot\nabla_\mathbf{x} $ is bounded in probability by:
\begin{equation*}
\begin{split}
    \mathbb{P}[(1-\epsilon)\|\nabla_\mathbf{x}\| \leq \|G\cdot\nabla_\mathbf{x}\| & \leq (1+\epsilon) \|\nabla_\mathbf{x}\|] \geq  \\
     & 1 - 2\cdot exp{\left(- k - \dfrac{1 + \epsilon}{2\beta^2}\right)}
\end{split} 
\end{equation*}
where $0 \leq \epsilon \leq 1$ is the estimation error.

\begin{proof}
    \begin{equation*}
    \begin{split}
   &  P[(1-\epsilon)\|\nabla_\mathbf{x}\| \leq \|G\nabla_\mathbf{x}\| \leq (1+\epsilon)\|\nabla_\mathbf{x}\|] \\
   & = P[(1-\epsilon)^2\|\nabla_\mathbf{x}\|^2 \leq \|G\nabla_\mathbf{x}\|^2 \leq (1+\epsilon)^2\|\nabla_\mathbf{x}\|^2] \\
   & \geq P\left[1-\epsilon \leq \dfrac{\|G \nabla_\mathbf{x}\|^2}{\|\nabla_\mathbf{x}\|^2} \leq 1 + \epsilon \right] \\
   & = P\left[\left|\dfrac{\|G \nabla_\mathbf{x}\|^2}{\|\nabla_\mathbf{x}\|^2} - 1\right| \leq  \epsilon \right] \\
   &= P\left[\left|\left\|G\dfrac{ \nabla_\mathbf{x}}{\|\nabla_\mathbf{x}\|}\right\|^2 - 1\right| \leq  \epsilon \right] \\
   & \geq 1 - 2e^{-k - \frac{\epsilon+1}{2\beta^2}}
    \end{split}
    \end{equation*}
    Where the last step is by Lemma 1.
\end{proof}

\end{document}